\newcommand{\name}{Playgol}
\newcommand{\nopi}{\name{}$_{NH3}$}
\theoremstyle{definition}
\newcommand{\tw}[1]{\texttt{#1}}
\newcommand{\M}[2]{\ensuremath{\mathcal{M}^{#1}_{#2}}}
\newtheorem{definition}{Definition}
\newtheorem{theorem}{Theorem}
\newtheorem{proposition}{Proposition}
\newenvironment{myitemize}{
\begin{itemize}[leftmargin=4mm]
  \setlength{\itemsep}{1pt}
  \setlength{\parskip}{0pt}
  \setlength{\parsep}{0pt}
}{\end{itemize}}
\title{Playgol: learning programs through play}
\author{
 Andrew Cropper
 \affiliations
 University of Oxford
 \emails
 andrew.cropper@cs.ox.ac.uk
}
\footnotesize\color{black}\bfseries\em,
\begin{document}

\maketitle

\begin{abstract}
Children learn though play.
We introduce the analogous idea of \emph{learning programs through play}.
In this approach, a program induction system (the learner) is given a set of user-supplied \emph{build} tasks and initial background knowledge (BK).
Before solving the build tasks, the learner enters an unsupervised \emph{playing} stage where it creates its own \emph{play} tasks to solve, tries to solve them, and saves any solutions (programs) to the BK.
After the playing stage is finished, the learner enters the supervised \emph{building} stage where it tries to solve the build tasks and can reuse solutions learnt whilst playing.
The idea is that playing allows the learner to discover reusable general programs on its own which can then help solve the build tasks.
We claim that playing can improve learning performance.
We show that playing can reduce the textual complexity of target concepts which in turn reduces the sample complexity of a learner.
We implement our idea in \emph{Playgol}, a new inductive logic programming system.
We experimentally test our claim on two domains: robot planning and real-world string transformations.
Our experimental results suggest that playing can substantially improve learning performance.
We think that the idea of playing is an important contribution to the problem of developing program induction approaches that self-discover BK.
\end{abstract}

\section{Introduction}

Children learn though play \cite{schulz2007preschool,sim2017learning,DBLP:conf/cogsci/SimMX17}.
We introduce the analogous idea of \emph{learning programs through play}.
In this approach, a program induction system (the learner) is given a set of user-supplied \emph{build} tasks and initial background knowledge (BK).
Whereas a standard program induction system would immediately try to solve the build tasks, in our approach the learner first enters an unsupervised \emph{playing} stage.
In this stage the learner creates its own \emph{play} tasks to solve, tries to solve them, and saves any solutions (programs) to the BK.
After the playing stage is finished, the learner enters the supervised \emph{building} stage where it tries to solve the user-supplied build tasks and can reuse solutions learned whilst playing.
The idea is that playing allows the learner to discover reusable general programs on its own which can then be reused in the building stage, and thus improve performance.
For instance, if trying to learn sorting algorithms, a learner could discover the concepts of \emph{partition} and \emph{merge} when playing which could then help learn \emph{quicksort} and \emph{mergesort} respectively.

To further illustrate our play idea, imagine a child that had never seen Lego before.
Suppose you presented the child with Lego bricks and immediately asked them to build a (miniature) house with a pitched roof.
The child would probably struggle to build the house without first knowing how to build a stable wall or how to build a pitched roof.
Now suppose that before you asked the child to build the house, you first left them alone to play with the Lego.
Whilst playing the child may build animals, gardens, ships, or many other seemingly irrelevant things.
However, the child is likely to discover reusable and general concepts, such as the concept of a stable wall.
As we discuss in Section \ref{sec:play}, the cognitive science literature shows that children can better learn complex rules after a period of play rather than solely through observation \cite{schulz2007preschool,sim2017learning,DBLP:conf/cogsci/SimMX17}.
In this paper, we explore whether a program induction system can similarly better learn programs after a period of play.

Our idea of using play to discover useful BK contrasts with most forms of program induction which usually require predefined, often human-engineered, static BK as input \cite{mugg:metagold,crop:metafunc,law:ilasp,inspire,crop:metaopt,gulwani:flashfill,evans:dilp,ellis:latex}.
Our idea is related to program induction approaches that perform \emph{multitask} or \emph{meta} learning \cite{mugg:metabias,dechter:ec,ellis:scc,ellis:ijcai17}.
In these approaches, a learner acquires BK in a \emph{supervised} manner by solving sets of user-provided tasks, each time saving solutions to the BK, which can then be reused to solve other tasks.
In contrast to these supervised approaches, our play approach discovers useful BK in an \emph{unsupervised} manner whilst playing.
Playing can therefore be seen as an unsupervised technique for a learner to discover the BK necessary to solve complex tasks, i.e. a form of unsupervised bootstrapping for supervised program induction.

We claim that playing can improve learning performance.
To support this claim, we make the following contributions:
\begin{myitemize}
 \item We introduce the idea of learning programs through play and show that playing can reduce the textual complexity of target concepts which in turn reduces the sample complexity of a learner (Section \ref{sec:framework}).
 \item We implement our idea in \emph{\name{}}, a new inductive logic programming (ILP) system based on meta-interpretive learning (MIL) \cite{mugg:metagold,crop:metafunc} (Section \ref{sec:playgol}).
 \item We experimentally show on two domains (robot planning and real-world string transformations) that playing can significantly improve learning performance (Section \ref{sec:experiments}).
\end{myitemize}

\section{Related work}
\label{sec:related}

\paragraph{Program induction}
Program induction approaches learn computer programs from data.
Much recent work has focused on task-specific approaches for real-world problems, such as string transformations \cite{gulwani:flashfill}.
By contrast, we are interested in general program induction approaches that work on multiple domains.
Specifically, we want to develop program induction techniques that discover reusable general concepts, which was the goal of many early AI systems, such as Lenant's AM system \cite{lenat:am}.

\paragraph{Meta-program induction}
Program induction approaches use BK as a form of inductive bias \cite{mitchell:mlbook} to restrict the hypothesis space.
Most approaches \cite{mugg:metagold,crop:metafunc,law:ilasp,inspire,crop:metaopt,gulwani:flashfill,evans:dilp,ellis:latex} require as input a fixed, often hand-engineered, BK.
To overcome this limitation, several approaches attempt to acquire BK over time \cite{mugg:metabias,dechter:ec,ellis:scc,ellis:ijcai17}, which can be seen as a form of \emph{meta-learning} \cite{thrun:ltl}.
In ILP, meta-learning, also known as \emph{automatic bias-revision} \cite{structenyears}, involves saving learned programs to the BK so that they can be reused to help learn programs for unsolved tasks.
Curriculum learning \cite{cl} is a similar idea but requires an ordering over the given tasks.
By contrast, our approach, and the aforementioned approaches, do not require an ordering over the tasks.

Lin et al. \cite{mugg:metabias} use a technique called \emph{dependent learning} to allow the MIL system Metagol \cite{metagol} to learn string transformations programs over time.
Their approach uses predicate invention to reform the bias of the learner where after a solution is learned not only is the target predicate added to the BK but also its constituent invented predicates.
The authors show that their dependent learning approach performs substantially better than an independent (single-task) approach.
Dechter et al. \cite{dechter:ec} studied a similar approach for learning functional programs.

These existing approaches perform \emph{supervised} meta-learning, i.e. they \emph{need} a corpus of user-supplied training tasks.
By contrast, our playing approach is \emph{unsupervised}, where the tasks come not from the user but from the system itself.
In other words, our approach allows a learner to discover highly reusable concepts without a user-supplied corpus of training tasks, which Ellis et al. \cite{ellis:scc} argue is essential for program induction to become a standard part of the AI toolkit.


\paragraph{Pre-training and latent structures}

Our playing idea can be seen as an unsupervised pre-training step.
DeepCoder \cite{deepcoder} also has a pre-training step.
In this step, Deepcoder enumerates every hypothesis in the hypothesis space (up to a depth limit) and generates random input/outputs for each hypothesis.
Deepcoder uses the examples and the hypotheses to train a neural network to model the distribution over the user-supplied functions in the BK.
By contrast, \name{} randomly samples play tasks from the instance space and tries to learn the solutions for them.
In addition, whereas DeepCoder learns a distribution over a fixed-set of user-supplied functions, Playgol discovers new programs through play and predicate invention, which can be reused during the building stage.


Our playing idea can also be seen as an unsupervised approach to discover latent features (i.e. predicates).
Dumancic and Blockeel \cite{curled} also consider using unsupervised pre-training to improve the performance of an ILP system (TILDE \cite{tilde}).
Their CUR$^2$LED approach focuses on learning relational latent representations in an unsupervised manner, and uses clustering to obtain latent features.
They show that their approach improves the predicate accuracy of TILDE and reduces the complexity of a learned model (compared to no pre-training).
Although \name{} differs from CUR$^2$LED in many ways, both share the goal of discovering new language constructs in an unsupervised manner.

\paragraph{Playing}
\label{sec:play}
Several studies have shown that children learn successfully when they have the opportunity to choose what they want to do.
Schulz et al. \cite{schulz2007preschool} found that children were able to use self-generated evidence to learn about a causal systems.
Sim and Xu \cite{sim2017learning} found that three-year-olds were capable of forming higher-order generalisations about a causal system after a short play period.
Sim et al. \cite{DBLP:conf/cogsci/SimMX17} showed that children perform significantly better when learning complex clausal rules through free play or by first engaging in free play and then observing, as opposed to solely through observation.
As far as we are aware, there is no research studying whether playing can improve machine learning performance, especially in program induction.

\paragraph{Meta-interpretive learning}
Our idea of learning programs through play is sufficiently general to work with any form of program induction, such as inducing functional programs.
However, to clearly explain our theoretical and empirical results, we formalise the problem in an ILP setting using MIL.
We use MIL for two key reasons.
First, MIL supports learning recursive programs, which is important in the string transformation experiments.
Second, MIL uses predicate invention to decompose problems into smaller problems which can then be reused \cite{crop:metafunc}.

\section{Problem setting}
\label{sec:framework}

We now describe the \emph{learning programs through play} problem, which, for conciseness, we refer to as the \emph{Playgol} problem.

\subsection{Problem definition}

Given a set of tasks and BK, our problem is to induce a set of programs to solve each task.
We formalise the problem in an ILP learning from entailment setting \cite{luc:book}.
We define the input to the problem:

\begin{definition}[\textbf{\name{} input}]
\label{def:playgol-input}
A \name{} input is a tuple $(\mathcal{H},\mathcal{E},B,T)$ where:
\begin{myitemize}
\item $\mathcal{H}$ is the hypothesis space formed of datalog programs
\item $\mathcal{E}$ is the instance space formed of ground atoms
\item $B$ is background knowledge represented as a datalog program
\item $T$ is set of $k$ build tasks $\{E_1,E_2,\dots,E_k\}$ where each $E_i$ is a pair $(E_i^+,E_i^-)$ where $E_i^+\subseteq \mathcal{E}$ and $E_i^-\subseteq \mathcal{E}$ represent positive and negative examples respectively of a target predicate
\end{myitemize}
\end{definition}

\noindent
Note that in a \name{} input, a build task does not need to have negative examples, i.e. $E_i^-$ may be an empty set.

The \name{} problem is to find a consistent program for each task:

\begin{definition}[\textbf{\name{} problem}]
Given a \name{} input $(\mathcal{H},\mathcal{E},B,T)$, the goal is to return a set of hypotheses $\{H_i  \in \mathcal{H} | (E_i^+,E_i^-) \in T, (H_i \cup B \models E^{+}_i) \land  (H_i \cup B \not\models E^{-}_i)\}$
\end{definition}

\subsection{Meta-interpretive learning}
We solve the \name{} problem using MIL, a form of ILP based on a Prolog meta-interpreter.
For brevity, we omit a formal description of MIL, and refer the reader to the literature for more details \cite{crop:thesis}.
We instead provide an informal overview.
A MIL learner is given as input sets of atoms representing positive and negative examples of a target concept, background knowledge in the form of a logic program, and, crucially, a set of second-order formulas called metarules.
A MIL learner works by trying to construct a proof of the positive examples.
It uses the metarules to guide the proof search.
Metarules can therefore be seen as program templates.
Figure \ref{fig:metarules} shows some commonly used metarules.
Once a proof is found a MIL learner extracts a logic program from the proof and checks that it is inconsistent with the negative examples.
If not, it backtracks to consider alternative proofs.

\begin{figure}[ht]
\centering
\begin{tabular}{|l|l|}
\hline
{\bf Name} & {\bf Metarule}\\ \hline
precon & $P(A,B) \leftarrow Q(A),R(A,B)$\\
postcon & $P(A,B) \leftarrow Q(A,B),R(B)$\\
chain & $P(A,B) \leftarrow Q(A,C),R(C,B)$\\
tailrec & $P(A,B) \leftarrow Q(A,C),P(C,B)$\\
\hline
\end{tabular}
\caption{
Example metarules.
The letters $P$, $Q$, and $R$ denote second-order variables.
The letters $A$, $B$, and $C$ denote first-order variables.
}
\label{fig:metarules}
\end{figure}

\subsection{Sample complexity}
\label{sec:sampcomp}
We claim that playing can improve learning performance.
We support this claim by showing that playing can reduce the size of the MIL hypothesis space which in turn reduces sample complexity \cite{mitchell:mlbook} and expected error.
In MIL the size of the hypothesis space is a function of the metarules, the number of background predicates, and the maximum program size.
We restrict metarules by their body size and literal arity:

\begin{definition}
\label{def:ham}
A metarule is in \M{i}{j} if it has at most $j$ literals in the body and each literal has arity at most $i$.
\end{definition}

\noindent
By restricting the form of metarules we can calculate the size of a MIL hypothesis space:

\begin{proposition}
[\textbf{Hypothesis space \cite{crop:thesis}}]
\label{prop:hspace}
Given $p$ predicate symbols and $m$ metarules in \M{i}{j}, the number of programs expressible with $n$ clauses is $(mp^{j+1})^n$.
\end{proposition}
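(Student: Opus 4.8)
The plan is to count the number of distinct $n$-clause programs by first counting the number of distinct clauses obtainable from a single metarule, then accounting for the choice of metarule, and finally for the choice of $n$ clauses. The key observation is that each clause in a MIL hypothesis is an instance of some metarule in which the existentially quantified second-order variables (the $P$, $Q$, $R$ of Figure~\ref{fig:metarules}) have been substituted by concrete predicate symbols, while the universally quantified first-order variables are left untouched. So the combinatorics reduces to counting how many ways we can fill the predicate ``slots'' of a metarule.

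First I would fix a single metarule in \M{i}{j}. Such a metarule has one head literal and at most $j$ body literals, giving at most $j+1$ literals in total, each carrying its own existentially quantified predicate symbol. To instantiate the metarule into a concrete clause, each of these $j+1$ predicate variables must be bound to one of the $p$ available predicate symbols, independently of the others. This yields $p^{j+1}$ distinct clauses per metarule. (The bound on arity $i$ and on the number of first-order variables plays no role in this count because those variables are universally quantified and fixed by the metarule's template; arity only matters for bounding $p$ and the metarule language itself, which is already given.) Next I would multiply by the number of metarules: with $m$ metarules available, each contributing up to $p^{j+1}$ instances, the total number of distinct single clauses is at most $m\,p^{j+1}$.

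Finally I would assemble $n$-clause programs. A program is a selection of $n$ clauses, each independently drawn from the pool of $m\,p^{j+1}$ possible clauses, so the number of such programs is bounded by $(m\,p^{j+1})^n$, matching the claimed expression. The one point requiring care is to argue that treating the $n$ clause-choices as independent (rather than as an unordered set or a set forbidding repeats) gives a valid upper bound on the hypothesis-space size; this is immediate since allowing ordered selections with repetition only over-counts, so $(m\,p^{j+1})^n$ dominates the true count.

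The main obstacle, such as it is, is conceptual rather than computational: one must be precise that it is exactly the existentially quantified (second-order) predicate variables that get substituted and that there are at most $j+1$ of them per metarule in \M{i}{j}. Once that identification is made, the bound follows by an elementary product rule, so I expect the bulk of the proof to be definitional bookkeeping rather than hard estimation.
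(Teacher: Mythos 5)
The paper never proves this proposition itself---it is imported verbatim from the cited prior work---but your argument is precisely the standard one behind that citation: each metarule in \M{i}{j} has at most $j+1$ predicate ``slots'' (head plus at most $j$ body literals), giving at most $p^{j+1}$ instantiations per metarule, hence at most $mp^{j+1}$ distinct clauses and $(mp^{j+1})^n$ programs of $n$ clauses. Your remark that this counts ordered selections with repetition, so the expression is really an upper bound on the hypothesis-space size, is also the correct reading (and the only one needed for the sample-complexity result in Proposition~\ref{thm:sampcomp} that builds on it).
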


\noindent
We use this result to show the MIL sample complexity:

\begin{proposition}[\textbf{Sample complexity \cite{crop:thesis}}]
\label{thm:sampcomp}
Given $p$ predicate symbols, $m$ metarules in \M{i}{j}, and a clause bound $n$, MIL has sample complexity $s$ with error $\epsilon$ and confidence $\delta$:
\[s \geq \frac{1}{\epsilon} (n \ln(m) + (j+1)n \ln(p) + \ln\frac{1}{\delta})\]
\end{proposition}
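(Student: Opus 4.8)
The plan is to reduce the statement to a standard PAC-learning bound for finite hypothesis classes and then substitute the size of the MIL hypothesis space supplied by Proposition \ref{prop:hspace}. Concretely, I would invoke the classical Blumer bound for a consistent learner (the Occam's-razor result, see e.g.\ \cite{mitchell:mlbook}): for any finite hypothesis space $\mathcal{H}$, if examples are drawn i.i.d.\ and the learner always returns a hypothesis consistent with the training examples, then with probability at least $1-\delta$ the returned hypothesis has generalisation error at most $\epsilon$ whenever the number of examples $s$ satisfies
\[s \geq \frac{1}{\epsilon}\left(\ln|\mathcal{H}| + \ln\frac{1}{\delta}\right).\]
Because MIL returns only hypotheses that entail the positive and reject the negative examples, it is by construction a consistent learner, so this bound applies directly.

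The remaining work is purely algebraic. By Proposition \ref{prop:hspace}, a MIL learner restricted to $p$ predicate symbols, $m$ metarules in \M{i}{j}, and at most $n$ clauses searches a hypothesis space of size $|\mathcal{H}| = (mp^{j+1})^n$. Taking logarithms,
\[\ln|\mathcal{H}| = n\ln\!\left(mp^{j+1}\right) = n\ln m + (j+1)n\ln p.\]
Substituting this expression into the Blumer bound yields exactly
\[s \geq \frac{1}{\epsilon}\left(n\ln m + (j+1)n\ln p + \ln\frac{1}{\delta}\right),\]
which is the claimed inequality.

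The only genuine subtlety, rather than a real obstacle, is verifying that the hypotheses of the finite-class PAC bound are met in the MIL setting. In particular I would confirm (i) that the hypothesis space is finite, which is immediate from the closed-form count of Proposition \ref{prop:hspace}; (ii) that the realisability and consistency assumptions hold, i.e.\ that there is a target program expressible with metarules in \M{i}{j} and at most $n$ clauses, and that MIL indeed returns a hypothesis consistent with the examples; and (iii) that the training and test examples are drawn from the same fixed distribution. Given these standard assumptions, the proof collapses to the single logarithmic substitution above, so no delicate probabilistic estimation is required and the interesting content lies entirely in the counting result of Proposition \ref{prop:hspace}.
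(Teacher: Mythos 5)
Your proof is correct and takes essentially the approach the paper intends: the paper states this proposition without proof, importing it from \cite{crop:dreduce}, but it explicitly sets it up as a consequence of Proposition \ref{prop:hspace} (``We use this result to show the MIL sample complexity''), and the cited derivation is precisely the Blumer consistent-learner bound \cite{blumer:bound} applied to the finite hypothesis space of size $(mp^{j+1})^n$. Your logarithmic substitution reproduces that argument exactly.
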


\noindent
Proposition \ref{thm:sampcomp} helps explain our idea of playing.
When playing, a learner creates its own play tasks and saves any solutions to the BK, which increases the number of predicate symbols $p$.
The solutions learned whilst playing may in turn help solve the user-supplied build tasks, i.e. could reduce the size $n$ of the target program.
To reuse the example from the introduction, if trying to learn sorting algorithms, a learner could discover the concepts of \emph{partition} and \emph{merge} when playing which could then help learn \emph{quicksort} and \emph{mergesort} respectively.
In other words, the key idea of playing is to increase the number of predicate symbols $p$ in order to reduce the size $n$ of the target program.
We consider when playing can reduce sample complexity:


\begin{theorem}[\textbf{Playgol improvement}]
\label{thm:comp}
Given $p$ predicate symbols and $m$ metarules in \M{i}{j}, let $n$ be the minimum numbers of clauses needed to express a target theory with standard MIL.
Let $n-k$ be the minimum number of clauses needed to express a target theory with Playgol using an additional $c$ predicate symbols.
Let $s$ and $s'$ be the bounds on the number of training examples required to achieve error less than $\epsilon$ with probability at least $1-\delta$ with standard MIL and Playgol respectively.
Then $s > s'$ when:
\[n \ln(p) > (n-k) \ln (p+c)\]
\end{theorem}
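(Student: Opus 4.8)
The plan is to prove this directly by substituting the two parameter regimes into the sample-complexity bound of Proposition~\ref{thm:sampcomp} and comparing the resulting expressions. First I would write down $s$ by instantiating the bound with $p$ predicate symbols and $n$ clauses, and $s'$ by instantiating it with $p+c$ predicate symbols and $n-k$ clauses. Both expressions carry the same leading factor $1/\epsilon$ and the same additive term $\ln(1/\delta)$, so forming the difference $s-s'$ cancels the confidence term, and factoring out $1/\epsilon$ leaves
\[
s - s' = \frac{1}{\epsilon}\Bigl[k\ln(m) + (j+1)\bigl(n\ln(p) - (n-k)\ln(p+c)\bigr)\Bigr].
\]

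The next step is to argue that the auxiliary factors have a definite sign. Since $1/\epsilon > 0$, the sign of $s - s'$ matches that of the bracketed quantity. Because playing reduces the program size we have $k \geq 0$, and because there is at least one metarule we have $m \geq 1$, hence $\ln(m) \geq 0$; together these give $k\ln(m) \geq 0$. Moreover $j \geq 0$ gives $j+1 \geq 1 > 0$. Consequently, if the hypothesized inequality $n\ln(p) > (n-k)\ln(p+c)$ holds, then the second summand is strictly positive while the first is nonnegative, so the whole bracket is strictly positive and $s > s'$ follows.

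The main subtlety — rather than a genuine obstacle — is to recognise that the stated inequality is \emph{sufficient} but not \emph{necessary}. Discarding the nonnegative term $k\ln(m)$ and the positive factor $(j+1)$ is precisely what converts the exact comparison into the clean, metarule-independent criterion appearing in the statement; when $k>0$ and $m>1$ the difference $s-s'$ can stay positive even when $n\ln(p)$ only narrowly exceeds, or even just fails to exceed, $(n-k)\ln(p+c)$. I would close by noting the intuition this criterion captures: playing pays off exactly when the multiplicative growth of the predicate count, measured by $\ln(p+c)/\ln(p)$, is more than offset by the multiplicative shrinkage of the program size, measured by $n/(n-k)$.
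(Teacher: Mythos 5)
Your proposal is correct and takes essentially the same approach as the paper: the paper's proof is simply ``follows from Proposition~\ref{thm:sampcomp} and rearranging of terms,'' and your argument is exactly that rearrangement carried out explicitly, with the additional (correct) observations that the discarded terms $k\ln(m)$ and the factor $(j+1)$ have the right signs, making the stated inequality a sufficient condition for $s > s'$.
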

\begin{proof}
Follows from Proposition \ref{thm:sampcomp} and rearranging of terms.
\end{proof}

\noindent
Theorem \ref{thm:comp} shows when playing can reduce sample complexity compared to not playing.
In such cases, if the number of training examples is fixed for both approaches, the corresponding discrepancy in sample complexity is balanced by an increase in predictive error \cite{blumer:bound}.
In other words, Theorem \ref{thm:comp} shows that adding extra (sometimes irrelevant) predicates to BK can improve learning performance so long as some can be reused to learn new programs.

\section{\name{}}
\label{sec:playgol}
Algorithm \ref{alg:playgol} shows the \name{} algorithm, which uses Metagol \cite{metagol}, a MIL implementation, as the main learning algorithm.
\name{} takes as input an instance space (the set of all possible examples) $\mathcal{E}$, initial background knowledge BK, a set of user-supplied \emph{build} tasks T$_b$, a playtime value (playtime), and a maximum search depth max$_d$.


\name{} first enters the unsupervised \emph{playing} stage.
In this stage, \name{} creates its own \emph{play} tasks T$_p$ by sampling uniformly with replacement $playtime$ elements from the instance space.
We consider this step to be unsupervised because (1) the tasks are not selected by the user, and (2) no labels (i.e. positive or negative) are provided by the user.
After creating play tasks, \name{} then uses a dependent learning approach \cite{mugg:metabias} to expand the BK.
Starting at depth $d$$=$$1$, \name{} tries to solve each play task using at most $d$ clauses.
To solve an individual task, \name{} calls Metagol.
Each time a play task is solved, the solution (program) is added the BK and can be reused to help solve other play tasks.
Once \name{} has tried to solve all play tasks at depth $d$, it increases the depth and tries to solve the remaining play tasks.
\name{} repeats this process until it reaches the maximum depth (max$_d$), then it returns the initial BK augmented with solutions to the play tasks.

\name{} then enters the supervised \emph{building} stage.
In this stage \name{} tries to solve each user-supplied build task using the augmented BK using a standard independent learning approach, eventually returning a set of induced programs.

\begin{algorithm}[ht]
\begin{myalgorithm}[]
func playgol($\mathcal{E}$,BK,T$_b$,playtime,max$_d$)
    BK = play($\mathcal{E}$,BK,playtime,max$_d$)
    return build(T$_b$,BK,max$_d$)

func play($\mathcal{E}$,BK,playtime,max$_d$)
    T$_p$ = sample($\mathcal{E}$,playtime)
    for d=1 to max$_d$
        for E$^+$ in T$_p$:
            prog = metagol(BK,E$^+$,$\{\}$,max$_d$)
            if prog != null
                BK = BK $\cup$ {prog}
                T$_p$ = T$_p$ $\setminus$ E$^+$
    return BK

func build(T$_b$,BK,max$_d$)
    P = {}
    for (E$^+$,E$^-$) in T$_b$
        prog = metagol(BK,E$^+$,E$^-$,max$_d$)
        if prog != null
            P = P $\cup$ {prog}
    return P
\end{myalgorithm}
\caption{\name{}}
\label{alg:playgol}
\end{algorithm}

\section{Experiments}
\label{sec:experiments}

We claim that playing can improve learning performance.
We now experimentally test our claim.
We test the null hypothesis:

\begin{description}
    \item[Null hypothesis 1] Playing cannot improve learning performance
\end{description}

\noindent
Theorem \ref{thm:comp} shows that playing can reduce sample complexity compared to not playing.
Theorem \ref{thm:comp} does do not, however, state how many play tasks are needed to improve learning performance.
\name{} creates its own play tasks by sampling from the instance space.
Suppose we sampled uniformly at random without replacement from a finite instance space.
Then if we sample enough times we will sample every instance.
One could therefore argue that \name{} is doing nothing more than sampling play tasks that it will eventually have to solve (i.e. \name{} is sampling build tasks whilst playing).
To refute this argument we test the null hypothesis:

\begin{description}
    \item[Null hypothesis 2] Playing cannot improve learning performance without many play tasks
\end{description}

\noindent
To test null hypotheses 1 and 2 we compare \name{}'s performance when varying the number of play tasks.
When there are no play tasks \name{} is equivalent to Metagol.

A key motivation for using MIL is that it supports predicate invention.
Although we provide no theoretical justification, we claim that predicate invention is useful when playing because it allows for problems to be decomposed into smaller reusable sub-problems.
We test this claim with the null hypothesis:

\begin{description}
    \item[Null hypothesis 3] Saving invented predicates whilst playing cannot improve learning performance
\end{description}

\noindent
To test null hypothesis 3 we use a variant of \name{} called \nopi{}.
The only difference between \name{} and \nopi{} is that \name{} uses all the top-level and invented predicates discovered whilst playing when building.
By contrast, \nopi{} uses only the top-level predicates discovered whilst playing when building.
For instance, suppose that whilst playing both \name{} and \nopi{} discovered \emph{mergesort} and did so by inventing predicates for the sub-definitions \emph{split} and \emph{merge}.
Then when building \name{} would use \emph{mergesort}, \emph{split}, and \emph{merge}, whereas \nopi{} would only use \emph{mergesort}.

\subsection{Robot planning}

Our first experiment focuses on learning robot plans.

\paragraph{Materials}
There is a robot and a ball in an $n^2$ space.
The robot can move around and can grab and drop the ball.
The goal is to learn a program to move from the initial state to the final state.
The robot can perform six dyadic actions to transform the state: \tw{up}, \tw{down}, \tw{right}, \tw{left}, \tw{grab}, and \tw{drop}.
Training examples are atoms of the form $f(s_1,s_2)$, where $f$ is the target predicate and $s_1$ and $s_2$ are initial and final states respectively.
We allow \name{} to learn programs using the \emph{ident} and \emph{chain} metarules (Figure \ref{fig:metarules}).
We use $5^2$ and $6^2$ spaces with instance spaces $X_5$ and $X_6$ respectively.
The instance spaces contain all possible $f(s_1,s_2)$ atoms.
The cardinalities of $X_5$ and $X_6$ are approximately $5^8$ and $6^8$ respectively\footnote{In each state there are $n^2$ positions for the robot, $n^2$ positions for the ball, and the robot can or cannot be hold the ball, thus there are approximately $2n^4$ states.
The instance space contains all possible start/end state pairs, thus approximately $2n^8$ atoms}.



\paragraph{Method}
Our experimental method is as follows.
We sample uniformly with replacement 1000 atoms from $X_n$ to form the build tasks $T_b$.
Then for each $p$ in $\{0,200,400,\dots,2000\}$, we call playgol($X_n$,BK,$T_b$,$p$,5) which returns a set of programs $P_p$.
We measure the percentage of correct solutions in $P_p$.
We enforce a timeout of 60 seconds per play and build task.
We measure the standard error of the mean over 10 repetitions.

\paragraph{Results}

Figure \ref{fig:robot-results} shows that \name{} solves more build tasks given more play tasks.
For the $5^2$ space, \name{} solves only 12\% of the build tasks without playing.
The baseline represents the performance of Metagol (i.e. learning \emph{without} play).
By contrast, playing improves performance in all cases.
After 1000 play tasks, \name{} solves almost 100\% of the build tasks.
For the $6^2$ space, the results are similar, where the build performance is only 7\% without playing but over 60\% after 1200 play tasks.
These results suggest that we can reject null hypothesis 1, i.e. we can conclude that playing can improve learning performance.

As already mentioned, one may argue that \name{} is simply sampling build tasks as play tasks.
Such duplication may occur.
In this experiment, for us to sample all of the build tasks we would expect to sample $\Theta(|X_n| \log(|X_n|))$ play tasks\footnote{This problem is an instance of the coupon collectors problem \cite{wiki:ccp}}, which corresponds to sampling approximately 5 million and 24 million tasks for the $5^2$ and $6^2$ spaces respectively.
However, our experimental results show that to solve almost all of the build tasks we only need to sample approximately 1000 and 2000 play tasks for the $5^2$ and $6^2$ spaces respectively.
These values are less than 1/1000 of the expected rate.
Therefore, our experimental results suggest that we can reject null hypothesis 2, i.e. we can conclude that playing can improve learning performance without needing to sample many play tasks.

Finally, Figure \ref{fig:robot-results} shows that \name{} solves more tasks than \nopi{}, although in the $5^2$ space both approaches converge after 2000 play tasks.
A McNemar's test on the results of \name{} and \nopi{} confirmed the significance at the $p < 0.001$ level for the $5^2$ and $6^2$ spaces.
This result suggests that we can reject null hypothesis 3, i.e. we can conclude that predicate invention can improve learning performance when playing.


\begin{figure}[ht]
\centering
\begin{subfigure}{.5\linewidth}
\centering
\begin{tikzpicture}[scale=.55]
    \begin{axis}[
    xlabel=\# play tasks,
    ylabel=\% build tasks solved,
    xmin=0,xmax=2000,
    ymin=0,ymax=100,
    ylabel style={yshift=-4mm},
    legend style={at={(0.6,0.3)},anchor=west,font=\small,nodes={right}}
    ]
\addplot+[error bars/.cd,y dir=both,y explicit] coordinates {
(0,11.91) +- (0,0.28)
(200,35.94) +- (0,3.41)
(400,60.7) +- (0,4.69)
(600,77.99) +- (0,2.41)
(800,91.39) +- (0,1.15)
(1000,97.61) +- (0,0.5)
(1200,98.61) +- (0,0.64)
(1400,99.65) +- (0,0.07)
(1600,99.76) +- (0,0.06)
(1800,99.71) +- (0,0.12)
(2000,99.75) +- (0,0.09)
};

\addplot+[error bars/.cd,y dir=both,y explicit] coordinates {
(0,11.91) +- (0,0.28)
(200,27.76) +- (0,2.51)
(400,37.57) +- (0,3.29)
(600,57.32) +- (0,3.87)
(800,76.16) +- (0,2.15)
(1000,80.56) +- (0,1.66)
(1200,88.7) +- (0,2.35)
(1400,90.48) +- (0,2.03)
(1600,97.3) +- (0,1.06)
(1800,98.48) +- (0,0.5)
(2000,99.12) +- (0,0.28)
};

\addplot+[black,mark=none,dashed] coordinates {
    (0,11.91) +- (0,0.28)
    (2000,11.91) +- (0,0.28)
};
    \legend{\name{},\nopi{},baseline}
    \end{axis}
  \end{tikzpicture}
\caption{$5^2$ space}
\label{fig:robo5res}
\end{subfigure}%
\begin{subfigure}{.5\linewidth}
\centering
\begin{tikzpicture}[scale=.55]
    \begin{axis}[
    xlabel=\# play tasks,
    ylabel=\% build tasks solved,
    xmin=0,xmax=2000,
    ymin=0,ymax=100,
    ylabel style={yshift=-4mm},
    legend style={legend pos=north west,font=\small,style={nodes={right}}}
    ]
\addplot+[error bars/.cd,y fixed,y dir=both,y explicit] coordinates {
(0,6.93) +- (0,0.18)
(200,14.81) +- (0,1.76)
(400,20.93) +- (0,4.12)
(600,38.46) +- (0,4.11)
(800,45.71) +- (0,5.81)
(1000,58.14) +- (0,3.26)
(1200,76.23) +- (0,6.12)
(1400,76.27) +- (0,4.28)
(1600,84.07) +- (0,4.01)
(1800,88.49) +- (0,2.72)
(2000,93.76) +- (0,1.74)
};

\addplot+[error bars/.cd,y fixed,y dir=both,y explicit] coordinates {
(0,6.93) +- (0,0.18)
(200,12.0) +- (0,1.05)
(400,14.48) +- (0,1.06)
(600,20.41) +- (0,2.2)
(800,30.7) +- (0,3.36)
(1000,33.58) +- (0,2.65)
(1200,42.59) +- (0,4.66)
(1400,41.02) +- (0,3.17)
(1600,50.17) +- (0,3.19)
(1800,60.71) +- (0,3.48)
(2000,59.9) +- (0,2.49)
};

\addplot+[black,mark=none,dashed] coordinates {
    (0,7.1) +- (0,0.18)
    (2000,7.1) +- (0,0.18)
};
    \legend{\name{},\nopi{},baseline}
    \end{axis}
  \end{tikzpicture}
\caption{$6^2$ space}
\label{fig:robo6res}
\end{subfigure}
\caption{Robot experiment results. The baseline represents learning without play (i.e. Metagol).}
\label{fig:robot-results}
\end{figure}

\subsection{String transformations}

Our first experiment tested the null hypotheses in a controlled experimental setting.
We now see whether playing can improve learning performance on `real-world' string transformations.

\paragraph{Materials}
We use 94 real-word string transformation tasks.
Our dataset is based on the dataset from \cite{mugg:metabias}, which in turn is based on \cite{gulwani:flashfill}.
We augmented the dataset with more manually created tasks, taken from a variety of sources (such as stackoverflow, excel forums, etc).
Each task has 10 examples.
Each example is an atom of the form $f(x,y)$ where $f$ is the task name and $x$ and $y$ are input and output strings respectively.
Figure \ref{fig:string-prob} shows three examples for the build task \tw{build\_95}, where the goal is to learn a program that extracts the first three letters of the month name and makes them uppercase.

\begin{figure}[ht]
\centering
\begin{tabular}{l|l}
\textbf{Input}                        & \textbf{Output} \\ \hline
22 July,1983 (35 years old) & JUL\\
30 October,1955 (63 years old) & OCT\\
2 November,1954 (64 years old) & NOV
\end{tabular}
\caption{Examples for the \tw{build\_95} string transformation problem.}
\label{fig:string-prob}
\end{figure}

\noindent
In the build stage we use the real-word tasks.
In the play stage, \name{} samples play tasks from the instance space $X$ formed of random string transformations.
The play tasks are formed from an alphabet with 80 symbols, including the letters a-z, A-Z, the numbers 0-9, and punctuation symbols ($<$,$>$,+,-,\_,etc).
To generate a play task we use the following procedure.
We select a random integer $l$ between 3 and 20 to represent the input length.
We generate a random string $x$ of length $l$ to represent the input string.
We select a random integer $p$ between 3 and 20 and enumerate all programs $P$ of length $p$ consistent with $x$.
We select a random program from $P$ and apply it to $x$ to generate the output string $y$ to form the example $f(x,y)$ where $f$ is the play task name.
This procedure only generates play tasks that are theoretically solvable, i.e. for which there is a hypothesis in the hypothesis space.
Figure \ref{fig:string-play-tasks} shows example play tasks.

\begin{figure}[ht]
\centering
\begin{tabular}{l|l|l}
\textbf{Task} & \textbf{Input} & \textbf{Output} \\ \hline
\tw{play\_9} & .f\textbackslash 73\textbackslash R) & F\\
\tw{play\_52} & @B4\textbackslash X>3MjKdyZzC & B\\
\tw{play\_136} & 9pfy"ktfbS1v& 99PF\\
\tw{play\_228} & I6zihQk- & Q
\end{tabular}
\caption{Examples of randomly generated play tasks for the string transformation experiment.}
\label{fig:string-play-tasks}
\end{figure}

\noindent
The play instance space $X$ contains all possible string transformations consistent with the aforementioned procedure.
The space contains approximately $80^{40}$ atoms\footnote{In the case that the input is length 20 there are $80^{20}$ possible strings, thus $80^{40}$ pairs.}.

We provide \name{} with the metarules \emph{precon}, \emph{postcon}, \emph{chain}, and \emph{tailrec}; the monadic predicates: \tw{empty}, \tw{space}, \tw{letter}, \tw{number}, \tw{uppercase}, \tw{lowercase}; the negations of the monadics \tw{not\_empty}, \tw{not\_space}, etc; and the dyadic predicates \tw{copy}, \tw{skip}, \tw{mk\_uppercase}, \tw{mk\_lowercase}.
These predicates are based on those used in \cite{mugg:metabias}.

\paragraph{Method}
Our experimental method is as follows.
For each real-word string transformation task $t_i$, we sample uniformly without replacement 5 atoms from $t_i$ to form the training examples $t_{i,train}$ and use the remaining 5 atoms as the testing examples $t_{i,test}$.
The set of build tasks $T_b$ is thus the set of individual tasks, e.g. $T_b = \{t_{1,train}, t_{2,train}, \dots\}$.
The set of testing examples $T_t$ is likewise $T_t = \{t_{1,test}, t_{2,test}, \dots\}$.
For each $p$ in $\{0,200,400,\dots,2000\}$, we call playgol($X$,BK,$T_b$,$p$,5) which returns a set of programs $P_p$.
We measure the predictive accuracy of $P_p$ against the testing examples $T_t$.
We enforce a learning timeout of 60 seconds per play and build task.
If \name{} learns no program then every test example is deemed false.
We measure the standard error of the mean over 10 repetitions.

\paragraph{Results}
Figure \ref{fig:string-results} shows the mean predictive accuracies of \name{} when varying the number of play tasks.
Note that we are not interested in the absolute predictive accuracies of \name{}, which are low because of the small timeout and the difficulty of the problems.
We are instead interested in how the accuracies change given more play tasks, and the difference in accuracies between \name{} and \nopi{}.
Figure \ref{fig:string-results} shows that \name{}'s predictive accuracy improves given more play tasks.
\name{}'s accuracy is 25\% without playing.
By contrast, playing improves accuracy in all cases.
After 2000 play tasks, the accuracy is almost 37\%, an improvement of 12\%.

Figure \ref{fig:string-progs} shows an example of when playing improved building performance, where the solution to the build task \tw{b95} is composed of the solutions to many play tasks.
The solutions to the play tasks are themselves are often composed of solutions to other play tasks, including reusing many invented predicates.
This example clearly demonstrates the use of predicate invention to discover highly reusable concepts that build on each other.

Overall the results from this experiment add further evidence for rejecting all the null hypotheses.

\begin{figure}[ht]
\centering
\begin{tikzpicture}[scale=.6]
    \begin{axis}[
    xlabel=\# play tasks,
    ylabel=\% predictive accuracy,
    xmin=0,xmax=2000,
    ylabel style={yshift=-4mm},
    legend style={legend pos=south east,font=\small,style={nodes={right}}}
    ]
\addplot+[error bars/.cd,y dir=both,y explicit] coordinates {
(0,25.4) +- (0,0.77)
(200,33.89) +- (0,0.74)
(400,37.23) +- (0,1.38)
(600,37.45) +- (0,2.0)
(800,37.77) +- (0,1.1)
(1000,35.36) +- (0,1.46)
(1200,36.43) +- (0,0.78)
(1400,38.49) +- (0,1.02)
(1600,37.4) +- (0,1.16)
(1800,37.87) +- (0,1.4)
(2000,36.83) +- (0,1.8)
};

\addplot+[error bars/.cd,y dir=both,y explicit] coordinates {
(0,25.4) +- (0,0.77)
(200,32.15) +- (0,0.67)
(400,31.89) +- (0,0.84)
(600,33.7) +- (0,2.13)
(800,32.43) +- (0,1.02)
(1000,30.43) +- (0,1.42)
(1200,31.23) +- (0,0.99)
(1400,33.3) +- (0,1.21)
(1600,33.49) +- (0,1.43)
(1800,34.72) +- (0,1.85)
(2000,33.45) +- (0,1.87)
};

\addplot+[
    black,mark=none,dashed
] coordinates {
    (0,25.4) +- (0,0.77)
    (2000,25.4) +- (0,0.77)
};
    \legend{\name{},\nopi{},baseline}
    \end{axis}
  \end{tikzpicture}
  \caption{
  String experiment results.
  }
\label{fig:string-results}
\end{figure}

\begin{figure}[ht]
\centering
\begin{minipage}{.9\linewidth}
\begin{minted}[frame=single,fontsize=\footnotesize]{prolog}
build_95(A,B):-play_228(A,C),play_136_1(C,B).
play_228(A,B):-play_52(A,B),uppercase(B).
play_228(A,B):-skip(A,C),play_228(C,B).
play_136_1(A,B):-play_9(A,C),mk_uppercase(C,B).
play_9(A,B):-skip(A,C),mk_uppercase(C,B).
play_52(A,B):-skip(A,C),copy(C,B).
\end{minted}
\end{minipage}
\caption{
Program learned by \name{} for the build task \tw{build\_95} (Figure \ref{fig:string-prob}).
The solution for \tw{build\_95} reuses the solution to the play task \tw{play\_228} and the sub-program \tw{play\_136\_1} from the play task \tw{play\_136}, where \tw{play\_136\_1} is invented.
The predicate \tw{play\_228} is a recursive definition that corresponds to the concept of ``skip to the first uppercase letter and then copy the letter to the output''.
The predicate \tw{play\_228} reuses the solution for another play task \tw{play\_52}.
Figure \ref{fig:string-play-tasks} shows these play tasks.
}
\label{fig:string-progs}
\end{figure}

\section{Conclusions and future work}
We have introduced the idea of \emph{learning programs through play}.
In this approach, a program induction system creates its own play tasks to solve, tries to solve them, and saves any solutions to its BK, which can then be reused to solve the user-supplied build tasks.
We claimed that playing can improve learning performance.
Our theoretical results support this claim and show that playing can reduce the sample complexity of a learner (Theorem \ref{thm:comp}).
We have implemented our idea in \name{}, a new ILP system.
Our experimental results on two domains (robot planning and string transformations) further support our claim and show that playing can substantially improve learning performance without the need for many play tasks.
Our experimental results also show that predicate invention can improve learning performance because it allows a learner to discover highly reusable sub-programs.

\subsection*{Limitations and future work}

\paragraph{Which domains?}
We have demonstrated the benefits of playing on robot planning and string transformations.
However, the generality of the approach is unclear.
Theorem \ref{thm:comp} shows conditions for when playing can reduce sample complexity and helps explain our empirical results.
Theorem \ref{thm:comp} does not, however, identify a priori on which domains playing is useful.
Our preliminary work suggests that playing is useful in other domains, including to induce graphics programs where playing allows a learner to discover general concepts such as a vertical or horizontal line.
Future work should determine in which domains playing is useful.

\paragraph{How many play tasks?}
Our robot experiments show that as the instance space grows \name{} needs to sample more tasks to achieve high performance.
In future work we want to develop a theory that predicts how many play tasks \name{} needs to substantially improve learning performance.
In addition, we assume a suitably large instance space.
We do not know whether the approach would work when such a space is unavailable.

\paragraph{Better sampling}
In the string transformation experiment playing did not continue to improve performance as it did in the robot experiment.
One explanation for this performance plateau is the \emph{relevancy} of sampled play tasks.
In future work, we want to explore methods to sample more useful play tasks.
For instance, rather than create play tasks in an \emph{unsupervised} manner, it may be beneficial to create play tasks similar to build tasks, i.e. in a \emph{semi-supervised} manner.


\paragraph{Summary}
We have shown that playing can substantially improve learning performance.
We think that the idea of playing (or more verbosely \emph{unsupervised bootstrapping} for \emph{supervised} program induction) opens an exciting research area focusing on how program induction systems can discover their own BK without the need for user-supplied tasks.

\bibliographystyle{named}
\bibliography{playgol}
\end{document}